\newtheorem{thm}{Theorem}
\DeclareMathOperator{\diag}{diag}
\DeclareMathOperator{\reals}{\mathbb{R}}
\DeclareMathOperator*{\argmax}{arg\,max}
\title{A Surprisingly Simple Continuous-Action POMDP Solver: Lazy Cross-Entropy Search Over Policy Trees}
\author {
    Marcus Hoerger\textsuperscript{\rm 1},
    Hanna Kurniawati\textsuperscript{\rm 2},
    Dirk Kroese\textsuperscript{\rm 1},
    Nan Ye\textsuperscript{\rm 1}
}
\crefname{equation}{Equation}{Equations}
\crefname{thm}{Theorem}{Theorems}
\begin{document}

\newcommand{\belSpace}{\ensuremath{\mathcal{B}}\xspace}
\newcommand{\bel}{\ensuremath{b}\xspace}
\newcommand{\belp}{\ensuremath{b'}\xspace}
\newcommand{\belInit}{\ensuremath{b_0}\xspace}
\newcommand{\belS}[1]{\ensuremath{b(#1)}\xspace}

\newcommand{\stSpace}{\ensuremath{\mathcal{S}}\xspace}
\newcommand{\stSpacep}{\ensuremath{\mathcal{S}'}\xspace}
\newcommand{\st}{\ensuremath{s}\xspace}
\newcommand{\stp}{\ensuremath{s'}\xspace}

\newcommand{\actSpace}{\ensuremath{\mathcal{A}}\xspace}
\newcommand{\act}{\ensuremath{a}\xspace}

\newcommand{\obsSpace}{\ensuremath{\mathcal{O}}\xspace}
\newcommand{\obs}{\ensuremath{o}\xspace}

\newcommand{\transF}{\ensuremath{T}\xspace}
\newcommand{\obsF}{\ensuremath{Z}\xspace}
\newcommand{\transFComp}{\ensuremath{T(\st, \act, \stp)}\xspace}

\newcommand{\rewFunc}{\ensuremath{R}\xspace}
\newcommand{\rewFuncComp}[2]{\ensuremath{R(#1, #2)}\xspace}

\newcommand{\hist}{\ensuremath{h}\xspace}

\newcommand{\pol}{\ensuremath{\pi}\xspace}
\newcommand{\optPol}{\ensuremath{\pi^*}\xspace}
\newcommand{\polSpace}{\ensuremath{\Pi}\xspace}

\newcommand{\polParam}{\ensuremath{\theta}\xspace}
\newcommand{\vecPolParam}{\ensuremath{{\bm\theta}}\xspace}
\newcommand{\paramSpace}{\ensuremath{\Theta}\xspace}
\newcommand{\vecMu}{\ensuremath{{\bm\mu}}\xspace}
\newcommand{\vecStd}{\ensuremath{{\bm\sigma}}\xspace}

\newcommand{\policyTree}{\ensuremath{T_{\policy}}\xspace}
\newcommand{\belTree}{\ensuremath{\mathcal{T}}\xspace}
\newcommand{\VT}{\ensuremath{\mathcal{H}}\xspace}
\newcommand{\episode}{\ensuremath{e}\xspace}

\newcommand{\cell}{\ensuremath{P}\xspace}

\newcommand{\ccite}[1]{~\citep{#1}}
\newcommand{\chref}[1]{Chapter~\ref{#1}}
\newcommand{\sref}{Section~\ref}
\newcommand{\appref}[1]{Appendix~\ref{#1}}
\newcommand{\aref}[1]{Algorithm~\ref{#1}}
\newcommand{\eref}[1]{eq.(\ref{#1})}
\newcommand{\fref}[1]{Figure~\ref{#1}}
\newcommand{\ffref}[2]{Figure~\ref{#1} and~\ref{#2}}
\newcommand{\ttref}[2]{Table~\ref{#1} and~\ref{#2}}
\newcommand{\tref}[1]{Table~\ref{#1}}
\newcommand{\lref}[1]{Lemma~\ref{#1}}
\newcommand{\dref}[1]{Definition~\ref{#1}}
\newcommand{\thref}[1]{Theorem~\ref{#1}}
\newcommand{\pref}[1]{Proposition~\ref{#1}}
\newcommand{\myfootref}[1]{$^{~\ref{#1}}$}
\newcommand{\subfigcap}[1]{\textit{#1}}
\newcommand{\subfig}[1]{\textit{#1}}
\newcommand{\proc}[1]{\mbox{\textsc{#1}}}

\newcommand{\solver}{Lazy Cross-Entropy Search Over Policy Trees\xspace}
\newcommand{\solverAbbr}{LCEOPT\xspace}

\newcommand{\ie}{i.e.\xspace}
\newcommand{\eg}{e.g.\xspace}
\newcommand{\expect}{\ensuremath{\mathbb{E}}}
\newcommand{\del}[1]{\mathrm{d}{#1}}
\newcommand{\indicator}[1]{\ensuremath{\mathbf{1}_{\{#1\}}}\xspace}
\newcommand{\comm}[2]{{\color{blue}\xspace#1}{\color{red}\xspace[#2]}}
\newcommand{\pomdpTuple}{\ensuremath{\mathcal{P}}\xspace}
\newcommand{\tree}{\ensuremath{\mathcal{T}}\xspace}
\newcommand{\node}{\ensuremath{\nu}\xspace}
\newcommand{\normal}{\ensuremath{\mathcal{N}}\xspace}
\newcommand{\vecComp}[2]{\ensuremath{#1_{(#2)}}\xspace}

\newcommand{\commNew}[1]{{\color{red} #1}}
\newcommand{\reuseTree}{RT}
\newcommand{\bellmanBackup}{BB}

\newcommand{\Naive}{Basic\xspace}
\newcommand{\naive}{basic\xspace}


\newcommand{\hpm}{\ensuremath{\hspace{-1em}\pm\hspace{-1em}}}

\maketitle

\begin{abstract}
The Partially Observable Markov Decision Process (POMDP) provides a principled framework for decision making in stochastic partially observable environments. However, computing good solutions for problems with continuous action spaces remains challenging. To ease this challenge, we propose a simple online POMDP solver, called \solver (\solverAbbr). At each planning step, our method uses a novel \emph{lazy} Cross-Entropy method to search the space of policy trees, which provide a simple policy representation. Specifically, we maintain a distribution on promising finite-horizon policy trees. The distribution is iteratively updated by sampling policies, evaluating them via Monte Carlo simulation, and refitting them to the top-performing ones. Our method is lazy in the sense that it exploits the policy tree representation to avoid redundant computations in policy sampling, evaluation, and distribution update. This leads to computational savings of up to two orders of magnitude. Our \solverAbbr is surprisingly simple as compared to existing state-of-the-art methods, yet empirically outperforms them on several continuous-action POMDP problems, particularly for problems with higher-dimensional action spaces.
\end{abstract}

\section{Introduction}\label{sec:Intro}
Decision making in stochastic partially observable environments is an essential, yet challenging problem in many domains, such as robotics\ccite{Kur22:Partially}, natural resource management \citep{filar2019pomdps} and cyber security \citep{schwartz2020pomdp}. The Partially Observable Markov Decision Process (POMDP) provides a principled framework to solve such decision making problems, by lifting the planning problem from an agent's state space to its \emph{belief space} \ie, the space of all probability distributions over the state space. While solving POMDPs exactly is computationally intractable in general\ccite{papadimitriou1987complexity}, many efficient approximately-optimal sampling-based online POMDP solvers have been developed (reviewed in \citet{Kur22:Partially}), making them viable tools for many realistic decision making problems under uncertainty.

However, solving POMDPs with continuous and high-dimensional action spaces remains challenging. Current state-of-the-art online solvers for POMDPs with continuous action spaces \citep{seiler2015online,sunberg2018online,mern2021bayesian,lim2020voronoi,hoerger22:ADVT} typically use Monte Carlo Tree Search (MCTS) \citep{coulom2007efficient} to find a near-optimal action amongst a sampled representative subset of the action space, often relying on partitioning of the action space, diminishing their performance for high-dimensional action spaces. 

We propose a new simple online POMDP solver for continuous action spaces, \solver (\solverAbbr), that uses a stochastic optimization approach in the policy space by extending the Cross-Entropy method for optimization \citep{Rubinstein2004:CrossEntropy, Boer2005tutorial} to compute a near-optimal policy, while avoiding any form of action-space partitioning. \solverAbbr represents a policy as a \emph{policy tree}, a compact and interpretable representation that gives rise to simple policy parameterizations via finite-dimensional vectors. Following the standard procedure of the Cross-Entropy method, \solverAbbr maintains a parameterized distribution over the policy parameters that is incrementally updated by sampling sets of parameters from the distribution and evaluating their associated policies via Monte Carlo sampling. The distribution is then updated towards the best-performing policies. This enables \solverAbbr to quickly focus its search on promising regions of the policy space.

\solverAbbr assumes independence of the marginal distributions over each component of the parameter vectors. This assumption allows us to derive a lazy parameter sampling, evaluation and distribution update method which only samples parts of a policy tree that are relevant for its evaluation. Our lazy approach reduces the cost of sampling policies by up to two orders of magnitude for problems with higher-dimensional action spaces, thereby significantly increasing the overall efficiency of \solverAbbr.

In contrast to many MCTS-based solvers, \solverAbbr avoids any form of partitioning of the action space, enabling it to scale much more effectively to problems with higher-dimensional action spaces. Despite its simplicity, \solverAbbr achieves remarkable results in various benchmark problems with continuous action spaces compared to current state-of-the-art methods, particularly for problems with higher-dimensional action spaces (up to 12-D). The source code of \solverAbbr is available at \url{https://github.com/hoergems/LCEOPT}.

\section{Related Work}\label{sec:pomdp_solvers}

Various efficient sampling-based online POMDP solvers have been developed for increasingly complex discrete and continuous POMDPs in the last two decades. In contrast to offline methods \ccite{bai2014integrated,kurniawati2011motion,Kurniawati08sarsop:efficient,Pin03:Point,Smi05:Point} that compute a policy offline before deployment, online solvers (e.g., \ccite{kurniawati2016online,silver2010monte,somani2013despot}) aim to further scale to larger and more complex problems by interleaving planning and execution, and focus on computing an optimal action for only the current belief during planning. For scalability purposes, \solverAbbr follows the online solving approach.

Some online solvers have been designed for continuous POMDPs, most of them being MCTS-based \citep{seiler2015online,sunberg2018online,mern2021bayesian,lim2020voronoi,hoerger22:ADVT} with some relying on partitioning the action space \citep{lim2020voronoi,hoerger22:ADVT}. 
These solvers do not scale well to problems with higher-dimensional action spaces though.

The Cross-Entropy method has been used in several algorithms for solving POMDPs and MDPs (the fully observable variant of POMDPs). Several of them consider discrete action spaces \citep{mannor2003cross, oliehoek2008cross, Wang2018}, while we consider POMDPs with continuous action spaces. 
\citet{omidshafiei2016graph} consider continuous actions spaces, but the optimization is carried out over a finite policy space. \citet{hafner2019learning} presents a Cross-Entropy based POMDP solver within a deep planning framework that optimizes over 
open-loop policies, while our method optimizes over closed-loop policies.


Additionally, some solvers\ccite{agha2011firm,sun2015high,van2011lqg,van2012motion} restrict beliefs to be Gaussian and use Linear-Quadratic-Gaussian (LQG) control\ccite{Lindquist73} to compute the best action. This strategy generally performs well in high-dimensional action spaces. However, they tend to perform poorly in problems with large uncertainties or non-Gaussian beliefs\ccite{hoerger2020linearization}. In contrast, our method puts no restriction on the class of beliefs, while simultaneously retaining efficiency in higher-dimensional action spaces.

\section{Preliminaries}\label{sec:background}
\paragraph{Partially Observable Markov Decision Process (POMDP)} \label{ssec:pomdp}

A POMDP provides a general mathematical framework for sequential decision making under uncertainty.
Formally, a POMDP is an 8-tuple $\langle \stSpace, \actSpace, \obsSpace, \transF, \obsF, \rewFunc, \bel_{0}, \gamma \rangle$. Initially, the agent is in a hidden state $s_{0} \in \stSpace$. This uncertainty is represented by an initial belief $\bel_0 \in \belSpace$, a probability distribution on the state space $\stSpace$, where \belSpace is the set of all possible beliefs. At each step $t \ge 0$, the agent executes an action $\act_{t} \in \actSpace$ according to some policy $\pol$. Due to stochastic effects of executing actions, it transitions from the current state $\st_t\in\stSpace$ to a next state $\st_{t+1} \in \stSpace$ according to the transition model $\transF(\st_{t}, \act_{t}, \st_{t+1}) = p(\st_{t+1} \vert \st_{t}, \act_{t})$. For discrete state spaces, $\transF(\st_{t}, \act_{t}, \st_{t+1})$ is often a probability mass function, whereas for continuous state spaces, it typically is a probability density function. The agent does not know the state $\st_{t+1}$ exactly, but perceives an observation $\obs_{t} \in\obsSpace$ from the environment according to the observation model $\obsF(\st_{t+1}, \act_{t}, \obs_{t}) = p(\obs_{t} \vert \st_{t+1}, \act_{t})$. 
In addition, the agent receives an immediate reward $r_{t} = R(\st_{t}, \act_{t}) \in \reals$. The agent's goal is to find a policy $\pol$ that maximizes the expected total discounted reward or the {\em policy value}
\begin{align}
    V_{\pol}(\bel_{0}) = \expect\left[\sum_{t=0}^{\infty}\gamma^t r_t \, \bigg\vert\, \bel_{0}, \pi\right],
\end{align}
where the discount factor $0 < \gamma < 1$ ensures that $V_{\pol}(\bel)$ is finite and well-defined.

The agent's decision space is the set $\Pi$ of policies, defined as mappings from beliefs to actions. The POMDP solution is then the optimal policy, denoted as \optPol and given by 
\begin{align}
\optPol = \argmax_{\pol \in \Pi} V_{\pol}(\bel),
\end{align}
for each belief $\bel\in\belSpace$.
A more elaborate explanation is available in \citet{kaelbling1998planning}.

\paragraph{Cross-Entropy Method for Optimization}\label{ssec:cross_entropy}

The Cross-Entropy (CE) Method \citep{Rubinstein2004:CrossEntropy, botev2013cross} is a gradient-free method for discrete and continuous optimization problems. Suppose $\mathcal{X}$ is an arbitrary solution space, and $f: \mathcal{X} \rightarrow \reals$ is an objective function that we aim to optimize, \ie, we aim to find $x^*\in\mathcal{X}$, such that $x^* = \argmax_{x\in\mathcal{X}}f(x)$. To do this, the CE-method iteratively constructs a sequence of sampling densities $d(\cdot; \eta_1), d(\cdot; \eta_2), \ldots, d(\cdot; \eta_T)$ over $\mathcal{X}$, with parameters $\eta_1, \ldots, \eta_T$ such that $d(\cdot; \eta_t)$ assigns more probability mass near $x^*$ as $t$ increases. 

In particular, suppose we start from an initial sampling density $d(\cdot; \eta_1)$. At iteration $1 \leq t \leq T$, the CE-method draws a sample of candidate solutions $X = \{x_i\}_{i=1}^N$ from $d(\cdot; \eta_t)$ and evaluates $f(x_i)$ for each $x_i\in X$. The sample objective values are then sorted in increasing order and are used to obtain the density parameter $\eta_{t+1}$ for the next iteration by solving the following maximum likelihood estimation problem:
\begin{equation}\label{eq:stochastic_program}
\eta_{t+1} = \argmax_{\eta} \frac{1}{N}\sum_{i=1}^N I_{\{f(x_i)\geq f_{(K)}\}}\mathrm{ln}(d(x_i, \eta)),
\end{equation}
where $f_{(K)}$ is the $K$-th largest sample objective value, with $0 < K \leq N$ being a user defined parameter. This process then repeats until the maximum number of iterations $T$ is reached, or some convergence criterion is met.

While solving \Cref{eq:stochastic_program} is generally intractable, analytic solutions exist for sampling densities of many commonly used distributions from the exponential family. For instance, in case $d$ is the density of a Gaussian distribution parameterized by $\eta = (\mu, \sigma^2)$, the solution of \cref{eq:stochastic_program} is $\hat{\eta} = (\hat{\mu}, \hat{\sigma}^2)$, with $\hat{\mu} = \frac{1}{\vert\mathcal{K}\vert}\sum_{x\in\mathcal{K}}x$ and $\hat{\sigma}^2 = \frac{1}{\vert\mathcal{K}\vert}\sum_{x\in\mathcal{K}}(x-\mu)^2$, where $\mathcal{K} = \{\bm x\in X\ \vert\ f(\bm x) \geq f_{(K)}\}$ are the top-$K$ performing samples, called \emph{elite samples}. That is, the updated distribution is a Gaussian distribution that is fitted to the elite samples. Similarly, if $\mathcal{X}$ is a multidimensional Euclidean space and $d$ is the density of a multivariate Gaussian distribution parameterized by $\eta = (\vecMu, \Sigma)$, the solution to \cref{eq:stochastic_program} is $\hat{\eta} = (\hat{\vecMu}, \hat{\Sigma})$, with $\hat{\vecMu} = \frac{1}{\vert\mathcal{K}\vert}\sum_{\bm x\in\mathcal{K}}\bm x$ and $\hat{\Sigma} = \frac{1}{\vert\mathcal{K}\vert}\sum_{\bm x\in\mathcal{K}}(\bm x-\vecMu)(\bm x -\vecMu)^\top$.


In practice, to avoid premature convergence towards a local optimum, $\eta$ is often updated according to a smoothed updating rule, \ie,
\begin{equation}
\hat{\eta} = (1-\alpha)\eta + \alpha\tilde{\eta}, 
\end{equation}
where $\tilde{\eta}$ is the solution to \cref{eq:stochastic_program}, and $0 < \alpha \leq 1$ is a smoothing parameter. More details on the CE-method for optimization can be found in \citet{Rubinstein2004:CrossEntropy, botev2013cross}.

\section{\solver}\label{sec:method}
We present the assumptions and an overview of our method \solver (\solverAbbr) in \Cref{Assumptions}
and \Cref{ssec:solver_overview} respectively, and then present the details in the following subsections.
We first describe our policy class and its parameterization in \Cref{ssec:policy_parameterization}, 
then describe how policy sampling, evaluation and distribution update are carried out in \Cref{ssec:sampling}.
Specifically, in \Cref{ssec:sampling_naive} we start with describing a basic method that highlights the conceptual framework of our approach but is computationally inefficient. We then describe a lazy method that is much more efficient and is actually used in our \solverAbbr algorithm. 
The key steps of \solverAbbr are shown in \Cref{alg:ceSolver}, with detailed pseudo codes provided in \Cref{asec:pseudo-codes}.

\subsection{Assumptions}\label{Assumptions}
We assume that the POMDP 
$\pomdpTuple = \langle \stSpace, \actSpace, \obsSpace, \transF, \obsF, \rewFunc, \bel_{0}, \gamma \rangle$
to be solved has a (bounded or unbounded) $D$-dimensional continuous action space $\actSpace$,
a discrete observation space $\obsSpace$, and an arbitrary (discrete or continuous or mixed) 
state space $\stSpace$.
Similarly to many existing online POMDP solvers, we assume access to a stochastic \emph{generative model} $G: \stSpace\times\actSpace \rightarrow \stSpace\times\obsSpace\times\mathbb{R}$ that simulates the transition, observation and reward models,
instead of requiring an explicit representation for them.
That is, for a given state $\st\in\stSpace$ and action $\act\in\actSpace$, the model $G$ generates a next state $\stp\in\stSpace$, observation $\obs\in\obsSpace$ and reward $r\in\mathbb{R}$ according to $(\stp, \obs, r) = G(\st,\act)$, where $(\stp, \obs)$ is distributed according to $p(\stp, \obs \,\vert\, \st, \act) = \transF(\st, \act, \stp)\obsF(\stp, \act, \obs)$, and $r = R(\st, \act)$.

\begin{algorithm}[htb]
\caption{\textproc{\solverAbbr}}\label{alg:ceSolver}
\begin{algorithmic}[1]
\While{problem not terminated}
    \State Initialize current policy distribution $d_{0}$ and set $k = 0$
    \While{planning budget not exceeded}\label{lst:while_loop}
        \State \emph{Lazily} sample and evaluate candidate policies $\bm{\theta}_{1:N}$ from current policy distribution $d_{k}$  
        \State $\mathcal{K} \gets $\ Set of top-$K$ performing parameter vectors\label{lst:k}
        \State Compute the new policy distribution $d_{k+1}$ from $\mathcal{K}$ and $d_{k}$
        \State $k \gets k+1$        
    \EndWhile 
    \State $\vecMu \gets$ the mean of the current policy distribution
    \State Execute $\act^* = \pol_{\vecMu}(\bel)$ and update belief  
\EndWhile
\end{algorithmic}
\end{algorithm}

\subsection{Overview of \solverAbbr}\label{ssec:solver_overview} 
\solverAbbr is an anytime online POMDP solver based on a \emph{lazy} CE-method that 
can handle incomplete data.
\Cref{alg:ceSolver} shows the key steps of \solverAbbr.
Basically, at each time step (lines 2 to 8), \solverAbbr estimates the optimal parameter $\vecPolParam^{*}
= \argmax_{\vecPolParam} V_{\pi_{\vecPolParam}}(\bel)$ of a parametric policy
$\pi_{\vecPolParam}$. Details regarding the policy parameterization are discussed in \Cref{ssec:policy_parameterization}
To this end, \solverAbbr maintains and updates a distribution $d$ on the parameter space $\Theta$. 
Here we chose the distribution to be a multivariate Gaussian distribution $\normal(\vecMu, \mathrm{diag}(\vecStd^2))$, parameterized by a mean vector $\vecMu$ and a vector of variances $\vecStd^2$. The notation $\mathrm{diag}(\vecStd^2)$ denotes a diagonal covariance matrix whose main diagonal is $\vecStd^2$. While other distributions could be chosen, this particular choice enables us to derive efficient parameter sampling, evaluation and distribution update approaches, as we will discuss in \Cref{ssec:sampling_lazy}. The distribution is iteratively updated using the Cross-Entropy method as described below.

\solverAbbr starts from an initial distribution $d_0 = \normal(\vecMu_{\mathrm{init}}, \mathrm{diag}(\vecStd_{\mathrm{init}}^2))$ over \paramSpace. At each planning step $k$ (lines 4 to 7), \solverAbbr samples $N>0$ policy parameters $\vecPolParam_{1:N}$ from the distribution $d_k$ and, for each sampled $\vecPolParam_i$, computes a policy value estimate $\widehat{V}_{\vecPolParam_i}(\bel) \approx V_{\pol_{\vecPolParam_i}}(\bel)$ for the current belief \bel as the average return over multiple randomly sampled simulations. We then compute the updated distribution $d_{k+1}$ by computing the distribution parameters $\vecMu$ and $\vecStd^2$ as the mean an variance of the $K>0$ policy parameters with highest estimated policy values. 
This process repeats from the updated distribution parameterized by $\vecMu$ and $\vecStd^2$ until the planning budget for the current step has been exceeded. \Cref{ssec:sampling_naive} describes a basic method to sample and evaluate policy parameters and update the distribution parameters, which serves as both a baseline and a precursor to our more efficient and novel lazy method, described in the same section. In the implementation of \solverAbbr, we use the lazy method.

After planning ends at each time step, the action for the agent to execute is then chosen to be $\act^* = \pol_\vecMu(\bel)$ (lines 9 to 10). After executing the action and perceiving an observation $\obs\in\obsSpace$, we update the belief to $\bel' = \tau(\bel, \act^*, \obs)$, where $\tau$ is the Bayesian belief update function. Our implementation uses a Sequential Importance Resampling (SIR) particle filter\ccite{arulampalam2002tutorial} to update the belief. This process is then repeated from the updated belief until some terminal condition is satisfied (line 1).

\subsection{Policy Parameterization}\label{ssec:policy_parameterization}
To facilitate a simple policy parameterization and derive an efficient method to evaluate a policy, \solverAbbr represents each policy \pol as a \emph{policy tree} $\tree_\pol$. From now on, we drop the subscript in $\tree_\pol$ and implicitly assume that \tree represents policy \pol. A policy tree is a tree whose nodes represent actions and whose edges represent observations. It describes a decision plan, such that the agent starts by executing the action associated with the root node of \tree. After perceiving an observation from the environment, the agent follows the edge representing the perceived observation and the process repeats from the child node of the followed observation edge. For infinite-horizon problems, the depth of a policy tree may be infinite, which makes defining a suitable policy parameterization difficult. 
Thus, in this paper, we restrict the space of policies to be the space $\polSpace_M\subset\polSpace$ of all policies represented by policy trees of depth $M$, where $M>0$ is a user defined parameter.

\begin{figure}[htb]
\centering
\small
\includegraphics[width=0.4\textwidth]{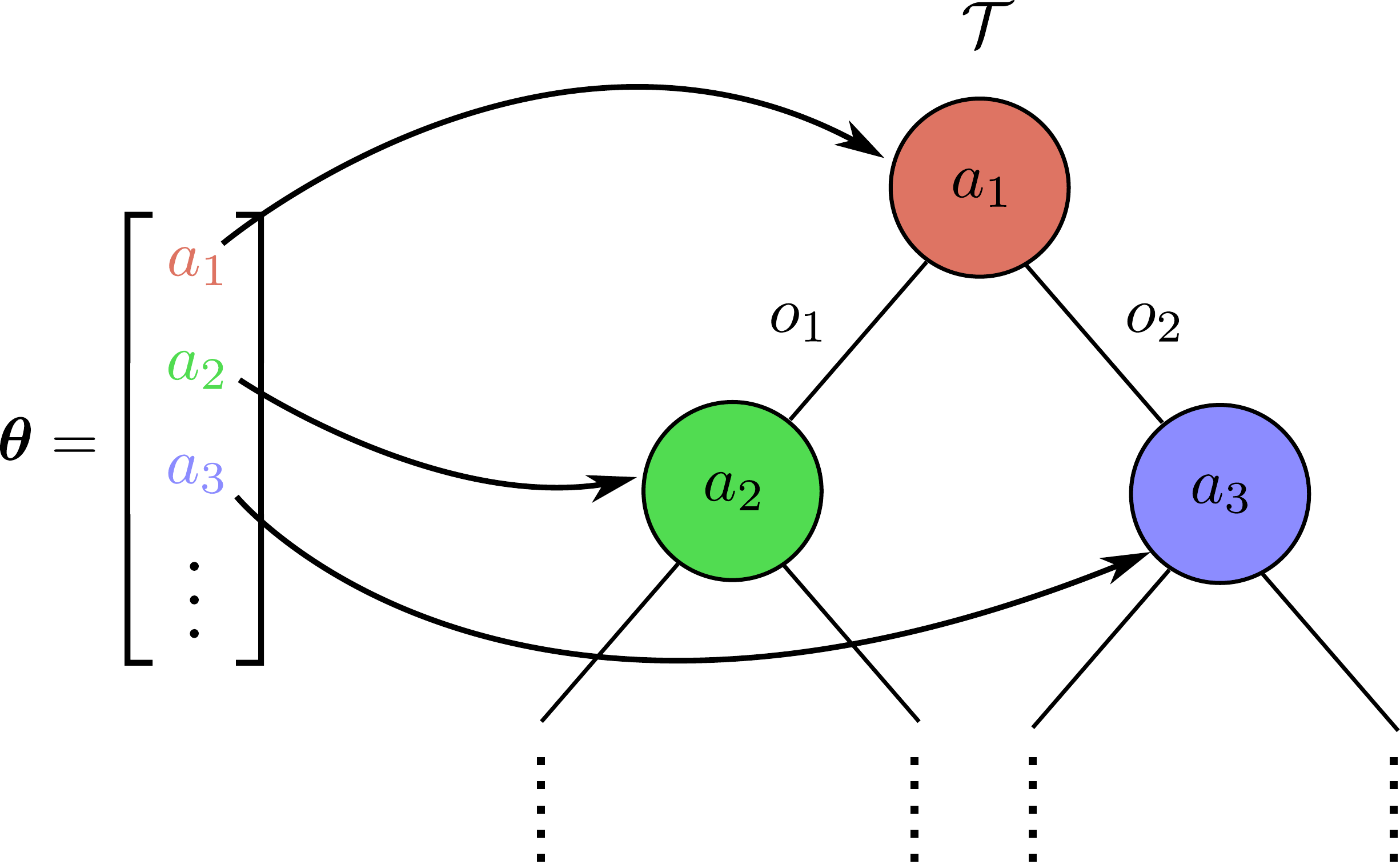}
\caption{Illustration of the relationship between the parameter vector \vecPolParam (left) and the policy tree $\tree$ (right), representing policy \pol. The components of \vecPolParam are actions that are associated with the action nodes in $\tree$.}
\label{f:PolicyTree}
\end{figure}

Policy trees provide a compact and interpretable representation of policies that give rise to a simple parameterization: A policy tree can be parameterized by a $(D\left\vert\tree \right\vert)$-dimensional vector \vecPolParam, such that each component \vecComp{\polParam}{\node} of \vecPolParam corresponds to the action associated with a particular node $\node\in\tree$ in the tree. Here, $D$ denotes the dimensionality of the action space, while $\vert\tree\vert$ denotes the number of nodes in \tree, which is equal to $(1-\vert\obsSpace\vert^{M+1})/(1-\vert\obsSpace\vert)$, where $\vert\obsSpace\vert$ is the cardinality of the observation space. \Cref{f:PolicyTree} illustrates the relationship between the parameter vector \vecPolParam and the policy tree \tree. 


\subsection{Policy Sampling, Evaluation and Distribution Update} \label{ssec:sampling}
We first describe a basic method for sampling and evaluating policy parameters and update the distribution parameters, followed by a discussion on our lazy method.

\subsubsection{The \Naive Method}\label{ssec:sampling_naive}


Our \naive method is a simple approach that highlights the conceptual framework of our \solverAbbr algorithm
and serves as a precursor to the more efficient lazy method described in the next subsection.

During planning, given the current policy parameter distribution $\normal(\vecMu, \mathrm{diag}(\vecStd^2))$, we sampe policy parameters $\vecPolParam_{1:N}$ directly from the distribution.
For each sampled policy $\pi_{\vecPolParam}$, we compute an estimate of the policy value $V_{\pol_{\vecPolParam}}(\bel)$ using Monte Carlo sampling. 
Specifically, starting from the current belief, we sample $L>0$ state trajectories by simulating $\pi_{\vecPolParam}$ and use the average of the accumulated discounted rewards of the trajectories as an approximation to $V_{\pol_{\vecPolParam}}(\bel)$. Once a trajectory reaches a leaf node in the policy tree $\tree_{\pi_{\vecPolParam}}$, we use the final sampled state to compute a heuristic estimate of the optimal value $V^*(\bel')$, where $\bel'\in\belSpace$ is the belief, conditioned on the action and observation sequences of the sampled trajectory. This heuristic estimate can be obtained in various ways, for instance by simulating a rollout policy, similarly to MCTS-based online POMDP solvers\ccite{silver2010monte,seiler2015online,sunberg2018online,hoerger22:ADVT}, or by hand-crafting estimates that exploit domain knowledge. Naturally, as we will see in \Cref{sec:experiments}, a good estimate of $V^*(\bel')$ often allows us to plan with a relatively short planning horizon while still achieving good policy performance.

After sampling and evaluating $N$ policy parameters as above, we update the parameters of the distribution over \paramSpace, based on the $K$ best performing parameters as shown in \Cref{alg:update_distr_basic} in Appendix A. In particular, we first compute new distribution parameters $\tilde{\vecMu}$ and $\tilde{\vecStd}^2$ as the mean and variance of the elite parameter vectors. The final distribution parameters \vecMu and $\vecStd^2$ are then computed according to $\vecMu\gets(1-\alpha)\vecMu + \alpha\tilde{\vecMu}$ and $\vecStd^2\gets(1-\alpha)\vecStd^2 + \alpha\tilde{\vecStd}^2$ respectively (\cref{lst3:mu_new,lst3:sig_new}), where $0\leq \alpha \leq 1$ is a user-defined smoothing parameter. As discussed in \Cref{ssec:cross_entropy}, this smooth update rule helps to avoid premature convergence of the distribution towards sub-optimal regions in \paramSpace.

\subsubsection{The Lazy Method}\label{ssec:sampling_lazy}

A key limitation of the basic method discussed in the previous section is the high computational cost of sampling full parameter vectors, particularly for high-dimensional action spaces. 
This is because, while sampling parameter vectors from the diagonal multivariate Gaussian distribution is simple, the large number of sampled 
parameter vectors can make it a computational bottleneck in the algorithm.
For instance, in a problem with a 12 dimensional action space, the basic method takes tens of seconds for just one iteration of the CE-method,
which makes it impractical in many applications (see \Cref{ssec:effects_lazy}).

Our lazy method provides a much more efficient way to implement the CE-method by using a key observation:
large portions of a parameterized policy tree are often irrelevant when estimating its policy value, since the sampled trajectories used for the evaluation may not reach them.
Based on this observation, we employ a lazy sampling method that only samples visited components of a parameter vector, where a component is visited if a sampled trajectory reaches its associated node in the policy tree.

To sample $N>0$ new parameter vectors $\vecPolParam_{1:N}$, we start by constructing $N$ vectors of size $D(1-\vert\obsSpace\vert^{M+1})/(1-\vert\obsSpace\vert)$ whose elements are set to the null symbol $\varnothing$.
For each policy $\pi_{\vecPolParam}$, once a sampled trajectory reaches a node $\node$ in the policy tree associated to $\pi_{\vecPolParam}$, we check whether the sub-vector $\vecComp{\polParam}{\node}$, \ie, the action associated with node \node, has already been sampled. If this is not the case, we sample a new action from the distribution $\normal(\vecComp{\mu}{\node}, \vecComp{\sigma}{\node}^2 I)$, which is the marginal of $\normal(\vecMu, \mathrm{diag}(\vecStd^2))$ corresponding to $\vecComp{\polParam}{\node}$, and assign the sampled action to $\vecComp{\polParam}{\node}$. Note that we sample $\vecComp{\polParam}{\node}$ only once, and keep it fixed for the remainder of the trajectory sampling process. 

The above sampling method results in a set of sampled parameter vectors for which some of the components are $\varnothing$, if their corresponding nodes have never been visited. Consequently, we have to slightly modify the distribution update step of the \naive method which computes new distribution parameters $\vecMu$ and $\vecStd^2$ based on the elite parameter vectors $\mathcal{K}$. In particular, we update the marginal distributions corresponding to each dimension of the parameter space independently, based on the entries of the elite parameter vectors in $\mathcal{K}$ that are not $\varnothing$. 
That is, for the parameter dimension $i$, we compute the marginal distribution parameters according to 
\begin{align}
    \tilde{\mu}_i &= \frac{1}{N_i}\sum_{\vecPolParam\in\mathcal{K}}\indicator{\polParam_i \neq \varnothing} \polParam_i, \label{eq:lazymu}\\
    \tilde{\sigma}_i^2 &= \frac{1}{N_i}\sum_{\vecPolParam\in\mathcal{K}}\indicator{\polParam_i \neq \varnothing}(\polParam_i - \tilde{\mu}_i)^2 \label{eq:lazysigma2}
\end{align} 
respectively\footnote{Note that $\theta_{i}$ denotes a single number, while \vecComp{\polParam}{\node} can possibly be a vector.}, where $\indicator{\cdot}$ denotes the indicator function, and $N_i = \sum_{\vecPolParam\in\mathcal{K}}\indicator{\polParam_i \neq \varnothing}$ is the number of parameter vector entries along the $i$-th dimension that are not $\varnothing$. 
Note that we only update the marginal distribution parameters at the $i$-th dimension if $N_i>0$. If $N_i = 0$, we simply set $\tilde{\mu}_i = \mu_i$ and $\tilde\sigma_i^2 = \sigma_i^2$. Similarly to the basic version of the distribution update, we compute the final marginal distribution parameters according to $\mu_i\gets(1-\alpha)\mu_i + \alpha\tilde{\mu}_i$ and $\sigma_i^2\gets(1-\alpha)\sigma_i^2 + \alpha\tilde{\sigma}_i^2$ respectively.



While the lazy algorithm is designed to speed up the \naive algorithm, they usually do not compute identical results, due to different sampling and distribution update strategies. However, the lazy update algorithm can still be derived from the standard cross-entropy framework described in \Cref{ssec:cross_entropy}, as shown by the following simple but interesting result.

\begin{thm} \label{thm:lazyce}
    Consider a set of (possibly partial) parameter vectors $\mathcal{K}$,
    a multivariate Gaussian distribution $\mathcal{N}(\vecMu, \diag(\vecStd^{2}))$, and the 
    the maximum likelihood estimation problem in the cross-entropy method
    \begin{equation}
        \tilde{\vecMu}, \tilde{\vecStd}^2 
        = 
        \argmax_{\vecMu, \vecStd^{2}}
        \sum_{\vecPolParam \in \mathcal{K}}
        \ln( \normal(\vecPolParam; \vecMu, \diag(\vecStd^{2}))),
    \end{equation}
    where $N(\vecPolParam; \vecMu, \diag(\vecStd^{2}))$  denotes the marginal probability density on the non-null dimensions.
    Then the solution of the above problem is given by \Cref{eq:lazymu,eq:lazysigma2}.
\end{thm}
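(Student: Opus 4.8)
The plan is to exploit the diagonal structure of the covariance matrix $\diag(\vecStd^2)$, which decouples the maximum-likelihood problem across the coordinates of the parameter space and reduces it to a collection of independent univariate Gaussian fits. First I would record the key consequence of diagonality: the joint density factorizes as a product of univariate Gaussian densities, one per coordinate,
\[ \normal(\vecPolParam; \vecMu, \diag(\vecStd^2)) = \prod_i \normal(\polParam_i; \mu_i, \sigma_i^2). \]
Because the coordinates are independent under this product form, the marginal density on any subset of coordinates is simply the product of the univariate densities over that subset. Hence, for a (possibly partial) vector $\vecPolParam$ with non-null index set $S(\vecPolParam) = \{i : \polParam_i \neq \varnothing\}$, the marginal density appearing in the objective is exactly $\prod_{i \in S(\vecPolParam)} \normal(\polParam_i; \mu_i, \sigma_i^2)$.

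Next I would substitute this product form into the objective and take logarithms, turning the log-likelihood into $\sum_{\vecPolParam \in \mathcal{K}} \sum_{i \in S(\vecPolParam)} \ln \normal(\polParam_i; \mu_i, \sigma_i^2)$. Swapping the order of summation and using the indicator $\indicator{\polParam_i \neq \varnothing}$ to encode membership in $S(\vecPolParam)$, this rearranges into
\[ \sum_i \left( \sum_{\vecPolParam \in \mathcal{K}} \indicator{\polParam_i \neq \varnothing} \, \ln \normal(\polParam_i; \mu_i, \sigma_i^2) \right), \]
a sum of terms each depending only on the single pair $(\mu_i, \sigma_i^2)$. The objective therefore separates completely across coordinates, so it may be maximized one coordinate at a time.

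It then remains to solve each per-coordinate subproblem, which is precisely the standard maximum-likelihood fit of a univariate Gaussian to the $N_i$ real numbers $\{\polParam_i : \vecPolParam \in \mathcal{K},\ \polParam_i \neq \varnothing\}$. Writing out the univariate log-density, differentiating with respect to $\mu_i$ and then $\sigma_i^2$, and setting both derivatives to zero gives the familiar stationarity equations whose unique solution is the sample mean and sample variance of those $N_i$ entries — exactly \Cref{eq:lazymu,eq:lazysigma2}. Concavity of the univariate Gaussian log-likelihood (after the usual reparameterization, or simply by the standard MLE argument) confirms these stationary points are the global maximizers. The degenerate case $N_i = 0$, in which the $i$-th coordinate receives no data, falls outside these formulas and is handled separately in the surrounding text by leaving $\mu_i, \sigma_i^2$ unchanged.

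The main obstacle — indeed the only substantive point — is the marginalization step: one must argue that the stated ``marginal probability density on the non-null dimensions'' genuinely equals the product of the corresponding univariate marginals. This is where the diagonal-covariance assumption is essential. For a general covariance matrix, the marginal over a coordinate subset would itself be a Gaussian whose parameters mix in the other coordinates via a Schur-complement computation, and the coordinates would no longer decouple, so the clean per-dimension formulas would fail. Once the product form of the marginal is established, everything downstream is the routine univariate Gaussian MLE computation.
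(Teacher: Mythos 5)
Your proposal is correct and follows essentially the same route as the paper's proof: factorize the marginal density over the non-null coordinates using the diagonal covariance, swap the order of summation to decouple the log-likelihood per dimension, and apply the standard univariate Gaussian MLE to each coordinate. Your additional remarks on why diagonality is essential for the marginalization step and on the $N_i=0$ degenerate case are consistent with the paper's treatment.
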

\begin{proof}
    The proof follows easily from the following observation:
    \begin{align*}
        &\quad 
        \sum_{\vecPolParam \in \mathcal{K}} \ln( \normal(\vecPolParam; \vecMu, \diag(\vecStd)^{2}))\\
        &=
        \sum_{\vecPolParam \in \mathcal{K}} 
        \sum_{i: \polParam_{i} \neq \varnothing} \ln( \normal(\polParam_{i}; \mu_{i}, \sigma_{i}^{2}))\\
        &= 
        \sum_{i} \sum_{\vecPolParam \in \mathcal{K}: \polParam_{i} \neq \varnothing}
        \ln( \normal(\polParam_{i}; \mu_{i}, \sigma_{i}^{2})).
    \end{align*}
    Thus we can estimate the $\mu_{i}$ and $\sigma_{i}^{2}$ for each dimension using the 
    standard maximum likelihood estimates for the mean and variance of a univariate Gaussian to obtain
    the formulas in the theorem.
\end{proof}

While \Cref{thm:lazyce} and its proof are very simple, the result reveals an important insight that our lazy parameter
sampling method enables the CE-method to be applied to incomplete data, while the standard CE-method only considers
complete data.

Using the above lazy parameter sampling method can lead to significant computational savings, since only the components of the parameter vectors are sampled that are relevant for evaluating the corresponding policy tree. In our experiments, we investigate the amount of computational savings of our lazy sampling and evaluation method compared to the \naive version discussed in the previous section.

\section{Experiments and Results}\label{sec:experiments}
We tested \solverAbbr on 4 decision making problems under partial observability:
\begin{itemize}
\item \textbf{ContTag}: ContTag \citep{seiler2015online} is an extension of the popular benchmark problem Tag \citep{Pin03:Point} to continuous action spaces.
\item \textbf{Pushbox2D/3D}: Pushbox2D/3D \citep{seiler2015online} is a scalable motion planning problem, based on air hockey in which an agent must push a disk-shape opponent into a goal area in the environment, while avoiding to push it into a boundary region.
\item \textbf{Parking2D/3D}: Parking2D/3D \citep{hoerger22:ADVT} is a navigation problem in which a vehicle must park between in a corridor between two obstacles, while having imperfect information regarding its starting location.
\item \textbf{SensorPlacement-D}: SensorPlacement-D \citep{hoerger22:ADVT} is a scalable motion planning under uncertainty problem, where a manipulator with $D$ degrees of freedom (DOF) and $D$ revolute joints (with $D\in\left \{6, 8, 10, 12 \right \}$) operates inside a 3D environment with muddy water and must attach a sensor at a marine structure while being subject to control errors.
\end{itemize}
Details regarding the problem scenarios are presented in \Cref{sec:problem_scenarios}. \Cref{ssec:experimental_setup} details the experimental setup, while the results are discussed in \Cref{ssec:results}.


\subsection{Experimental Setup}\label{ssec:experimental_setup}
The purpose of our experiments is two-fold. The first one is to compare \solverAbbr with three state-of-the-art online POMDP solvers for continuous action spaces, POMCPOW\ccite{sunberg2018online}, VOMCPOW\ccite{lim2020voronoi} and ADVT\ccite{hoerger22:ADVT} on the above problem scenarios. To do this, we implemented \solverAbbr, the tree baseline solvers POMCPOW, VOMCPOW and ADVT, and the problem scenarios in C++ using the OPPT framework\ccite{hoerger2018software}. All evaluated solvers have parameters that need to be tuned for achieving good performance, including the depth of the lookahead trees for POMCPOW, VOMCPOW and ADVT, and the policy trees for \solverAbbr respectively. To approximately determine the best parameters for each solver in the problem scenarios, we used the CE-method to search the solver's parameter spaces. The parameters for each solver and their searched value ranges are detailed in \Cref{asec:params}. 
For each solver and problem scenario, we then used the best parameter point and ran $1,000$ simulation runs with a fixed planning time of $1$s (measured in CPU time) per planning step.

The second purpose is to understand the computational benefits of our proposed lazy parameter sampling, evaluation, and distribution update method compared to the \naive method, both described in \Cref{ssec:sampling_naive}. To investigate this, we implemented a variant of \solverAbbr that uses the \naive method and tested both variants of \solverAbbr on the ContTag and SensorPlacement-12 problems. For both algorithms and problems, we measure the average CPU time required to reach $50$ CE-iterations per planning step, for different sizes of the policy trees. Here, a CE-iteration refers to one iteration within the while-loop in \Cref{alg:ceSolver} (\cref{lst:while_loop}), \ie, sampling and evaluating a set of policy parameters and updating the distribution over policy parameters. To see whether there is a notable difference in the quality of the policies computed by both algorithms, we tested them on the ContTag and SensorPlacement-6 problems, where we used a fixed number of $50$ CE-iterations per planning step for both algorithms and problems. We then ran $2,000$ simulation runs for each algorithm and problem. For both algorithms, we used the same parameters that were used for comparing \solverAbbr with the state-of-the-art methods.


All simulations were run single-threaded on an AMD EPYC 7003 CPU with 4GB of memory. The next section discusses the results of our experiments.
	
\subsection{Results}\label{ssec:results}
\begin{table*}[ht]
\small
\centering
\caption{Average total discounted rewards and $95\%$ confidence intervals of all tested solvers for the ContTag, Pushbox, Parking and SensorPlacement problems. The average is taken over $1,000$ simulation runs per solver and problem, with a planning time of 1s per step. The best result for each problem scenario is highlighted in bold.}\label{t:results1}
\begin{adjustbox}{max width=\textwidth}
\begin{tabular}{l*{5}{>{\hspace{0.0em}}rcl}}
& \multicolumn{3}{c}{ContTag} & \multicolumn{3}{c}{Pushbox2D} & \multicolumn{3}{c}{Pushbox3D} & \multicolumn{3}{c}{Parking2D} & \multicolumn{3}{c}{Parking3D} \\ \hline
\solverAbbr (Ours) & $0.02$ & \hpm & $0.23$ & $\mathbf{399.7}$ & \hpm & $\mathbf{8.7}$ & $\mathbf{358.6}$ & \hpm & $\mathbf{12.3}$ & $\mathbf{53.4}$ & \hpm & $\mathbf{0.4}$ & $\mathbf{47.2}$ & \hpm & $\mathbf{0.6}$ \\
ADVT & $\mathbf{0.37}$ & \hpm & $\mathbf{0.18}$ & $356.9$ & \hpm & $9.9$ & $327.8$ & \hpm & $14.7$ & $43.1$ & \hpm & $2.1$ & $34.6$ & \hpm & $2.1$ \\
VOMCPOW & $-1.95$ & \hpm & $0.31$ & $323.5$ & \hpm & $12.8$ & $145.7$ & \hpm & $13.7$ & $1.3$ & \hpm & $1.9$ & $-11.7$ & \hpm & $1.3$ \\
POMCPOW & $-2.00$ & \hpm & $0.31$ & $96.7$ & \hpm & $15.4$ & $25.9$ & \hpm & $12.2$ & $-3.9$ & \hpm & $1.8$ & $-18.4$ & \hpm & $1.1$ \\ \\
& \multicolumn{3}{c}{\hspace{0em}SensorPlacement-6} & \multicolumn{3}{c}{\hspace{0em}SensorPlacement-8} & \multicolumn{3}{c}{\hspace{0em}SensorPlacement-10} & \multicolumn{3}{c}{\hspace{0em}SensorPlacement-12} \\ \hline
\solverAbbr (Ours) & $\mathbf{914.3}$ & \hpm & $\mathbf{2.6}$ & $\mathbf{885.5}$ & \hpm & $\mathbf{2.9}$ & $\mathbf{858.8}$ & \hpm & $\mathbf{4.2}$ & $\mathbf{832.1}$ & \hpm & $\mathbf{4.5}$ \\
ADVT & $859.2$ & \hpm & $12.2$ & $794.1$ & \hpm & $15.3$ & $631.4$ & \hpm & $23.9$ & $456.8$ & \hpm & $28.2$ \\
VOMCPOW & $754.4$ & \hpm & $12.8$ & $540.5$ & \hpm & $17.2$ & $276.8$ & \hpm & $17.8$ & $73.6$ & \hpm & $12.1$ \\
POMCPOW & $354.5$ & \hpm & $19.9$ & $124.2$ & \hpm & $15.3$ & $12.2$ & \hpm & $8.2$ & $-6.0$ & \hpm & $4.9$ \\
\end{tabular}
\end{adjustbox}
\end{table*}


\begin{table*}[htb]
\small
\centering
\caption{Comparison of the time efficiency of the \naive and the lazy policy sampling strategies on the ContTag and SensorPlacement-12 problems.
The table shows the average CPU time (in seconds) to reach 50 CE-iterations for different policy tree depths. The average is taken over 20 planning steps. Larger values indicate a larger parameter sampling cost. For the ContTag problem, we set the number of candidate policies to $N=493$ and the number of trajectories per parameter vector to $L=103$ for both algorithms. For the SensorPlacement-12 problem, we set $N=496$ and $L=11$.}\label{t:cpu_time}
\begin{tabular}{rl|ccccc}
& & $M=1$ & $M=2$ & $M=3$ & $M=4$ & $M=5$ \\ \hline
ContTag & Lazy & $0.43$ & $0.64$ & $0.90$ & $1.19$ & $1.39$ \\
& \Naive & $0.43$ & $0.64$ & $0.91$ & $1.23$ & $1.57$ \\
 \midrule
SensorPlacement-12 & Lazy & $1.09$ & $1.58$ & $2.2$ & $3.07$ & $5.13$ \\
& \Naive & $2.65$ & $12.61$ & $137.56$ & $900.97$ & $3928.45$
\end{tabular}
\end{table*}

\begin{table}[htb]
\centering
\caption{Average total discounted rewards and $95\%$ confidence intervals of \solverAbbr using the lazy and the \naive policy sampling strategies in the ContTag and SensorPlacement-12 problems. For both algorithms we use $50$ CE-iterations per planning step. The average is taken over 2,000 simulation runs for both algorithms and problems.}\label{t:res_vs_lazy}
\begin{tabular}{l*{2}{>{\hspace{0.0em}}rcl}}
 & \multicolumn{3}{c}{ContTag} & \multicolumn{3}{c}{SensorPlacement-6} \\ \hline
Lazy & $0.15$ & $\hpm$ & $0.16$ & $920.4$ & $\hpm$ & $1.8$ \\
\Naive & $-0.11$ & $\hpm$ & $0.16$ & $919.8$ & $\hpm$ & $1.8$
\end{tabular}
\end{table}


\subsubsection{Comparison with State-of-the-Art Methods}\label{ssec:comparison}
\Cref{t:results1} shows the average total discounted rewards of all tested solvers for the ContTag, Pushbox, Parking and SensorPlacement problems. \solverAbbr outperforms the baseline solvers in all problems, except for the ContTag problem, in which ADVT performs slightly better. 

Notably \solverAbbr significantly outperforms the baselines in the SensorPlacement problems. The results indicate that \solverAbbr scales substantially better to higher-dimensional action spaces. For instance, in the SensorPlacement-12 problem (which consists of a 12-dimensional continuous space), \solverAbbr achieves a better result than the best baseline, ADVT, in the 8-dimensional SensorPlacement-8 problem. A similar effect can be seen in the Parking problems, where the performance of \solverAbbr suffers only marginally compared to the baselines as the dimensionality of the action space increases. 

We conjecture that this is due to the action sampling strategies of the baselines. POMCPOW uses a simple uniform action sampling strategy, which does not take the value of already sampled actions into account. ADVT and VOMCPOW construct Voronoi cells in the action space at each sampled belief and bias their action sampling strategies towards cells with good performing representative actions. However, for higher-dimensional action spaces, these cells may be too large to quickly focus sampling towards near optimal regions in the action space. On the other hand, \solverAbbr is a partition-free method which uses distributions over the policy space. At each node in the policy trees, \solverAbbr maintains and updates a sampling distribution that quickly focuses its probability mass towards near-optimal regions in the action space. This property allows \solverAbbr to scale much more effectively to higher-dimensional action spaces, compared to the baselines.

Another interesting observation is that all solvers require only a relatively short planning horizon for most of the problem scenarios. For the ContTag, Pushbox and SensorPlacement problems, all solvers require an effective planning horizon of only two steps. The reason is that all solvers use heuristic estimates of $V^*(\bel)$ when the planning horizon is reached, that is, a leaf node in the policy trees of \solverAbbr and lookahead trees of POMCPOW, VOMCPOW and ADVT is reached. 
For all problem scenarios, we designed simple state-dependent heuristic estimates of $V^*(\bel)$ 
by removing partial observability and action noise from the problem. 
Details regarding the heuristic estimates are provided in \Cref{sec:heuristic}. Such simple heuristics are often useful in keeping the required effective planning horizon short. For the Parking problems, we require a slightly longer effective planning horizon of five steps to achieve good performance, because actions have potentially long-term consequences. For instance, if the vehicle decides to accelerate aggressively while navigating towards the goal, it may require multiple steps to decelerate in order to avoid crashing into an obstacle. Such long-term consequences of actions are often difficult to capture via simple state-dependent heuristics, leading to a longer effective planning horizon required to find good solutions.

\subsubsection{Comparison of the \naive Method and the Lazy Method}\label{ssec:effects_lazy}
\Cref{t:cpu_time} shows the average CPU time (measured in seconds) required for \solverAbbr with the lazy and the \naive parameter sampling method to reach $50$ CE-iterations per planning step for the ContTag and SensorPlacement-12 problems respectively, as we increase the policy tree depth $M$. It can be seen that for the ContTag problem, both the lazy and \naive methods perform similar for more shallower trees (up to $M=4$), while the lazy method performs slightly better for policy trees of depth $M=5$. However, the lazy method outperforms the \naive one significantly in the SensorPlacement-12 problem, even for shallow policy trees. 
The reason is that the dimensionality of the parameter space increases dramatically for deeper policy trees, due to the $12$-dimensional action space. As a consequence, sampling full parameter vectors becomes computationally too expensive. 
On the other hand, our lazy method only samples the components of the parameter vectors that are relevant to evaluate the associated policy. The number of relevant components of a parameter vector is typically much smaller than the dimensionality of the parameter space, which leads to significant computational savings when sampling parameter vectors lazily.

\Cref{t:res_vs_lazy} shows the average total discounted rewards for both the lazy and the \naive version of \solverAbbr in the ContTag and SensorPlacement-6 problems, where for both variants, the number of CE-iterations per planning step is set to $50$. It can be seen that despite the different policy distribution update behaviours as discussed in \Cref{ssec:sampling_lazy}, both algorithms perform similar in the ContTag and SensorPlacement problems. This indicates that the lazy algorithm is able to retain the good performance of the \naive one, while being much more efficient computationally.

\section{Conclusion}\label{s:conclusion}
Online POMDP solvers have seen tremendous progress in the last two decades in solving increasingly complex decision making under uncertainty problems. Despite this progress, solving continuous-action POMDPs remains a challenge. In this paper, we propose a simple online POMDP solver, called \solver (\solverAbbr) designed for POMDP problems with continuous state and action spaces. \solverAbbr uses a lazy version of the CE-method on the space of policy trees to find a near-optimal policy. Despite its simple structure, \solverAbbr shows a strong empirical performance against state-of-the-art methods on four benchmark problems, particularly on those with higher-dimensional action spaces. These results indicate that gradient-free optimization methods that do not rely on partitioning the search space are viable tools for solving continuous POMDPs. 

An interesting avenue for future work is to generalize our method to POMDPs with continuous observation spaces. This would allow us to consider an even larger class of POMDPs.
In addition, our lazy CE method can be used to handle problems where the objective function 
can be evaluated using just a subset of the parameters.
We can perform lazy sampling to sample only the relevant parameters, and perform distribution
update by maximizing the marginal likelihood of the partial parameter vectors.
This may be useful in other applications.

\paragraph{Acknowledgements}
This work is partially supported by the Australian Research Council (ARC) Discovery Project 200101049.

\bibliography{references}

\clearpage

\appendix
\section{Pseudo-Code of \solverAbbr}\label{asec:pseudo-codes}
Here we present the pseudo codes of \solverAbbr. \Cref{alg:ceSolver_detailed} shows the overview of \solverAbbr, while \Cref{alg:evaluate_policy} and \Cref{alg:update_distr_basic} show the \naive policy sampling and distribution update approaches. Finally, \Cref{alg:evaluate_policy_lazy} and \Cref{alg:update_distr_lazy} present our lazy policy sampling, evaluation and distribution update approach that is used in \solverAbbr.

\begin{algorithm}[htb]
\caption{\textproc{\solverAbbr}(Initial belief $\bel_0$, number of candidate policies per iteration $N>0$, number of elite samples $K>0$, number of trajectories $L>0$, initial distribution parameters $(\vecMu_{\mathrm{init}}, \vecStd_{\mathrm{init}}^2)$, smoothing parameter $0<\alpha\leq 1$)}\label{alg:ceSolver_detailed}
\begin{algorithmic}[1]
\State $\bel \gets \bel_0$
\State isTerminal $\gets$\ False
\While{isTerminal is False}
    \State $\vecMu \gets \vecMu_{\mathrm{init}},$\ \ \ $\vecStd^2 \gets \vecStd_{\mathrm{init}}^2$ \label{lst:init_distr}
    \While{planning budget not exceeded}\label{lst:while_loop_2}         
        \For{$i=1$ to $N$}\label{lst:inner_loop1}            
            \State // Sample and evaluate a candidate policy \label{lst:inner_loop2}
            \State $(\vecPolParam_i, \widehat{V}_i(\bel)) \gets\ $\textproc{SampleAndEvaluatePolicy}(\bel, $(\vecMu, \vecStd^2)$, $L$) \Comment{\Cref{alg:evaluate_policy_lazy}} \label{lst:inner_loop3}          
        \EndFor \label{lst:inner_loop4}
        \State // Sort evaluated parameters in increasing order according to their estimated values $\widehat{V}$
        \State $\mathcal{K} \gets $\ Set of top-$K$ performing parameter vectors\label{lst:k_2}
        \State // Update the distribution parameters
        \State $(\vecMu, \vecStd^2) \gets \textproc{UpdateDistribution}((\vecMu, \vecStd^2), \mathcal{K}, \alpha)$ \Comment{\Cref{alg:update_distr_lazy}}\label{lst:new_distr}              
    \EndWhile     
    \State $\act^* \gets \pol_{\vecMu}(\bel)$\label{lst:chosen_act}
    \State $(\obs$,\ isTerminal$) \gets$\ Execute $\act^*$    
    \State $\bel' \gets \tau(\bel, \act^*, \obs)$ \label{lst:belief_update}
    \State $\bel \gets \bel'$    
\EndWhile
\end{algorithmic}
\end{algorithm}

\begin{algorithm}[htb]
\caption{\textproc{SampleAndEvaluatePolicy\Naive}(Belief $\bel$, distribution parameters $(\vecMu, \vecStd^2)$, number of trajectories $L$)}\label{alg:evaluate_policy}
\begin{algorithmic}[1]
\State // Sample parameter vector \vecPolParam from a Multivariate Normal distribution parameterized by $(\vecMu, \vecStd^2)$.
\State $\vecPolParam \sim \normal(\vecMu, \mathrm{diag}(\vecStd^2))$\label{lst2:sample_param}
\State $\tree \gets\ $Construct policy tree parameterized by \vecPolParam
\State $M \gets$\ Depth of $\tree$
\State $\node \gets\ $Root node of $\tree_{\vecPolParam}$\label{lst2:set_root}
\For{$l=1$ to $L$} 
    \State isTerminal $\gets$ False     
    \State // Sample an initial state from \bel.
    \State $\st\sim\bel$\label{lst2:sample_state}
    \For{$m=1$ to $M$}
        \State $\act \gets \vecComp{\polParam}{\node}$\label{lst2:act_comp}
        \State // Sample a next state $\stp$, observation $\obs$ and immediate reward $r_m$ from the generative model $G$.
        \State $(\stp, \obs, r_m) \gets G(\st, \act)$\label{lst2:simulate_action}        
        \State $\node \gets\ $Child node of \node via observation edge \obs\label{lst2:next_node}
        \State $\st \gets \stp$                
        \If{\st is terminal}\label{lst2:terminal_r}
            \State isTerminal $\gets$ True
            \State \textbf{break}
        \EndIf        
    \EndFor    
    \If{isTerminal $=$ False}
        \State $r_{M+1} \gets $\ Heuristic(\st)\label{lst2:compute_heuristic}
    \Else
        \State $r_{M+1} \gets 0$
    \EndIf
    \State // Accumulated total discounted reward of trajectory $l$.
    \State $R_l \gets \sum_{m=1}^{M+1} \gamma^{m-1} r_m$\label{lst2:acum_rew}      
\EndFor
\State $V \gets \frac{1}{L}\sum_{l=1}^{L} R_l$\label{lst2:value_estimate}
\State \Return $(\vecPolParam, V)$
\end{algorithmic}
\end{algorithm}

\begin{algorithm}[htb]
\caption{\textproc{UpdateDistribution\Naive}(Distribution parameters $(\vecMu$, $\vecStd^2$), elite samples $\mathcal{K}$, smoothing parameter $\alpha$)}\label{alg:update_distr_basic}
\begin{algorithmic}[1]
\State $\tilde{\vecMu} \gets \frac{1}{\vert\mathcal{K}\vert}\sum_{\vecPolParam\in\mathcal{K}}\vecPolParam$;\ \ \ $\tilde{\vecStd}^2 \gets \frac{1}{\vert\mathcal{K}\vert}\sum_{\vecPolParam\in\mathcal{K}} (\vecPolParam - \tilde{\vecMu})^2$\label{lst3:new_param}
\State $\vecMu \gets (1-\alpha)\vecMu + \alpha \tilde{\vecMu}$\label{lst3:mu_new}
\State $\vecStd^2\gets (1-\alpha)\vecStd^2 + \alpha\tilde{\vecStd}^2$\label{lst3:sig_new}
\State \Return $(\vecMu, \vecStd^2)$
\end{algorithmic}
\end{algorithm}


\begin{algorithm}[!htb]
\caption{\textproc{SampleAndEvaluatePolicy}(Belief $\bel$, Distribution parameters $(\vecMu, \vecStd)$, Number of trajectories $L$)}\label{alg:evaluate_policy_lazy}
\begin{algorithmic}[1]
\State $\vecPolParam \gets $\ Construct empty parameter vector\label{lst4:empty_param}
\State $\tree_\vecPolParam \gets\ $Construct policy tree parameterized by $\vecPolParam$
\State $M \gets$\ Depth of $\tree_\vecPolParam$
\State $\node \gets\ $Root node of $\tree_{\vecPolParam}$
\For{$l=1$ to $L$} 
    \State isTerminal $\gets$ False
    \State $\st\sim\bel$
    \For{$m=1$ to $M$}
	\State $\act \gets \vecComp{\polParam}{\node}$
	\If{$\act = \varnothing$}\label{lst4:check_if}
	    \State // Sample \act from the marginal distribution at node $n$.
    	    \State $\act \sim $\ \normal$(\vecComp{\mu}{\node}, \mathrm{diag}(\vecComp{\sigma}{\node}^2))$\label{lst4:sample_new_1}
    	    \State $\vecComp{\polParam}{\node} \gets \act$\label{lst4:sample_new_2}
    	\EndIf
        \State $(\stp, \obs, r_m) \gets G(\st, \act)$        
        \State $\node \gets\ $Child node of \node via observation edge \obs
        \State $\st \gets \stp$                
        \If{\st is terminal}
            \State isTerminal $\gets$ True
            \State \textbf{break}
        \EndIf        
    \EndFor    
    \If{isTerminal $=$ False}
        \State $r_{M+1} \gets $\ Heuristic(\st)
    \Else
        \State $r_{M+1} \gets 0$
    \EndIf    
    \State $R_l \gets \sum_{m=1}^{M+1} \gamma^{m-1} r_m$      
\EndFor
\State $V \gets \frac{1}{L}\sum_{l=1}^{L} R_l$
\State \Return $(\vecPolParam, V)$
\end{algorithmic}
\end{algorithm}

\begin{algorithm}[htb]
\caption{\textproc{UpdateDistribution}(Distribution parameters $(\vecMu$, $\vecStd^2)$, elite samples $\mathcal{K}$, smoothing parameter $\alpha$)}\label{alg:update_distr_lazy}
\begin{algorithmic}[1]
\State // The term $D(1-\vert\obsSpace\vert^{M+1})/(1-\vert\obsSpace\vert)$ is the size of a parameter vector in $\mathcal{K}$.
\For{$i = 1$ to $D(1-\vert\obsSpace\vert^{M+1})/(1-\vert\obsSpace\vert)$}
    \State $N_i \gets \sum_{\vecPolParam\in\mathcal{K}}\indicator{\polParam_i \neq \emptyset}$\label{lst5:n_i}
    \If{$N_i > 0$}\label{lst5:check_if}
    \State $\tilde{\mu}_i \gets \frac{1}{N_i}\sum_{\vecPolParam\in\mathcal{K}} \indicator{\polParam_i \neq \emptyset} \polParam_i$\label{lst5:new_mu} 
    \State $\tilde{\sigma}_i^2 \gets \frac{1}{N_i}\sum_{\vecPolParam\in\mathcal{K}}\indicator{\polParam_i \neq \emptyset}(\polParam_i - \tilde{\mu}_i)^2$\label{lst5:new_sig}
    \State $\mu_i \gets (1-\alpha)\mu_i + \alpha\tilde{\mu}_i$\label{lst5:final_mu}
    \State $\sigma_{i}^2 \gets (1-\alpha)\sigma_i^2 + \alpha\tilde{\sigma}_i^2$\label{lst5:final_sig}
    \EndIf
\EndFor
\State \Return $(\vecMu, \vecStd^2)$
\end{algorithmic}
\end{algorithm} 


\section{Detailed Description of Problem Scenarios}\label{sec:problem_scenarios}
Here we provide a more detailed description of the problem scenarios used to evaluate \solverAbbr. The problem scenarios are illustrated in \Cref{f:problemScenarios}.
\begin{figure*}[htb]
\centering
\begin{tabular}{c@{\hskip5pt}c@{\hskip5pt}c@{\hskip5pt}c@{\hskip5pt}c}
\includegraphics[height=0.15\textwidth]{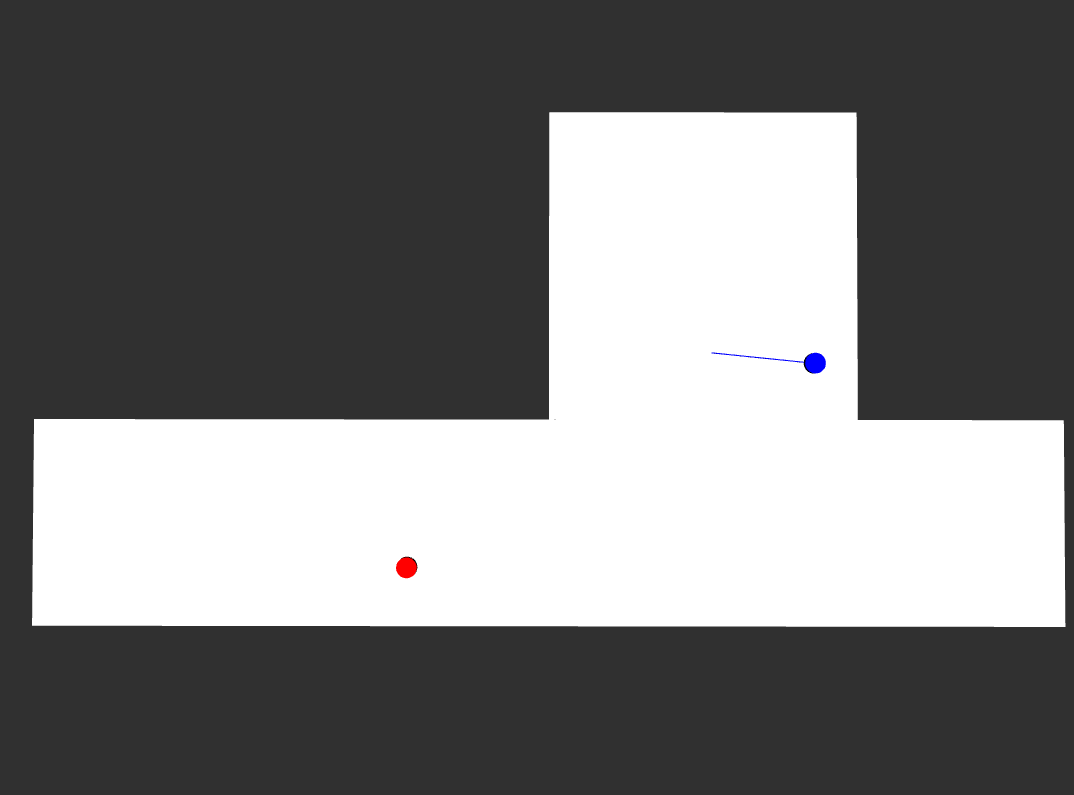} &
\includegraphics[height=0.15\textwidth]{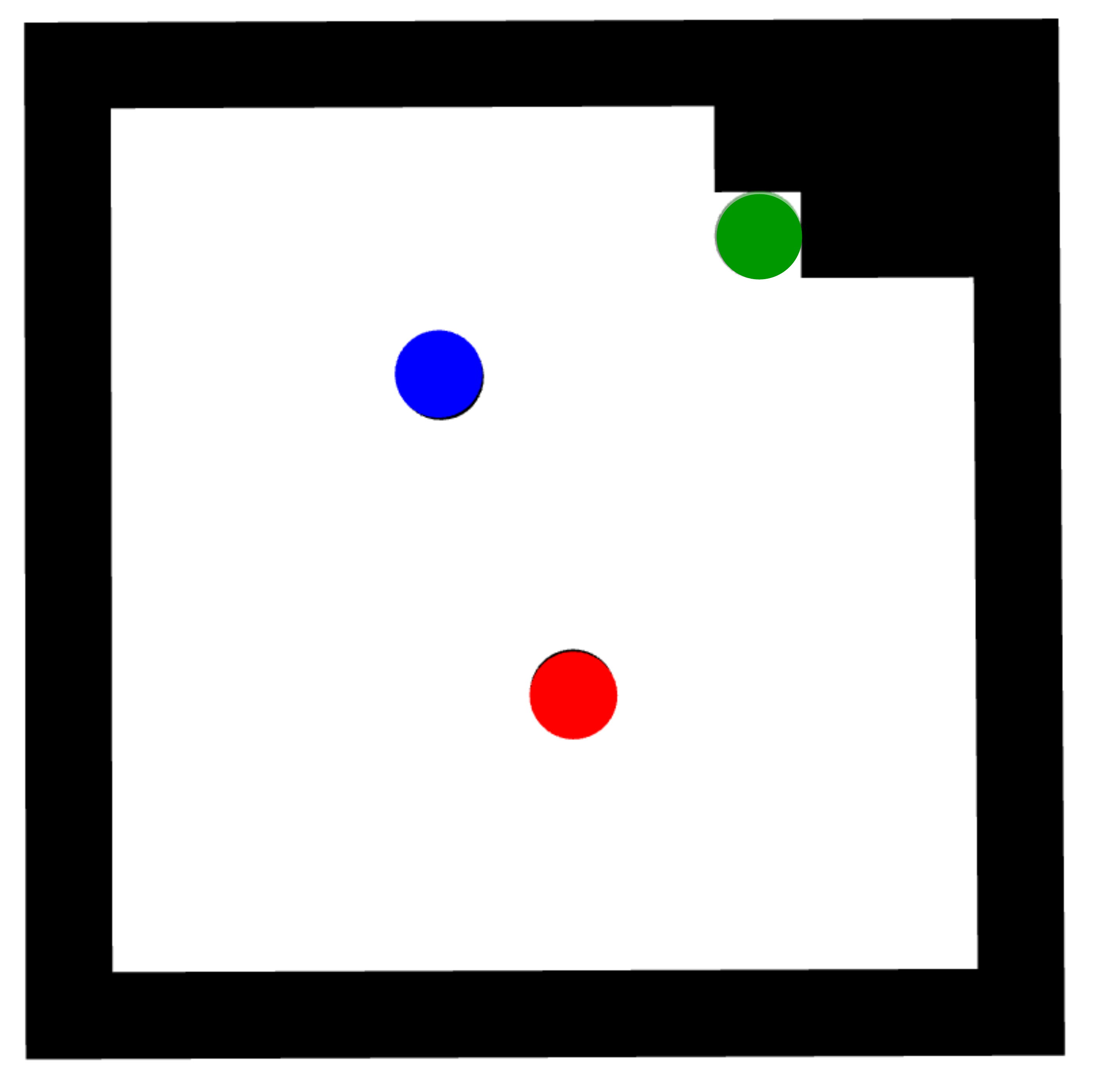} &
\includegraphics[height=0.15\textwidth]{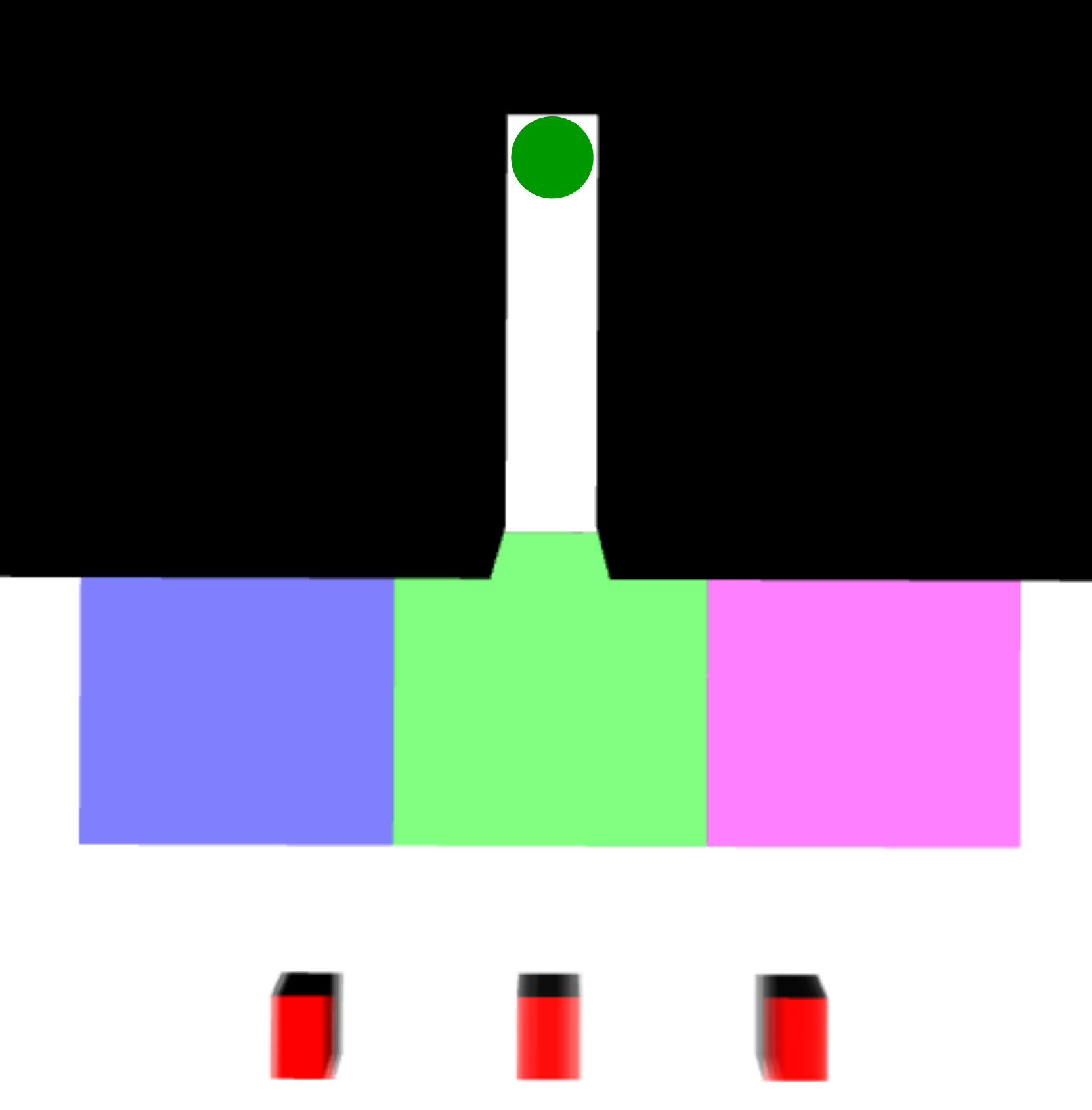} &
\includegraphics[height=0.15\textwidth]{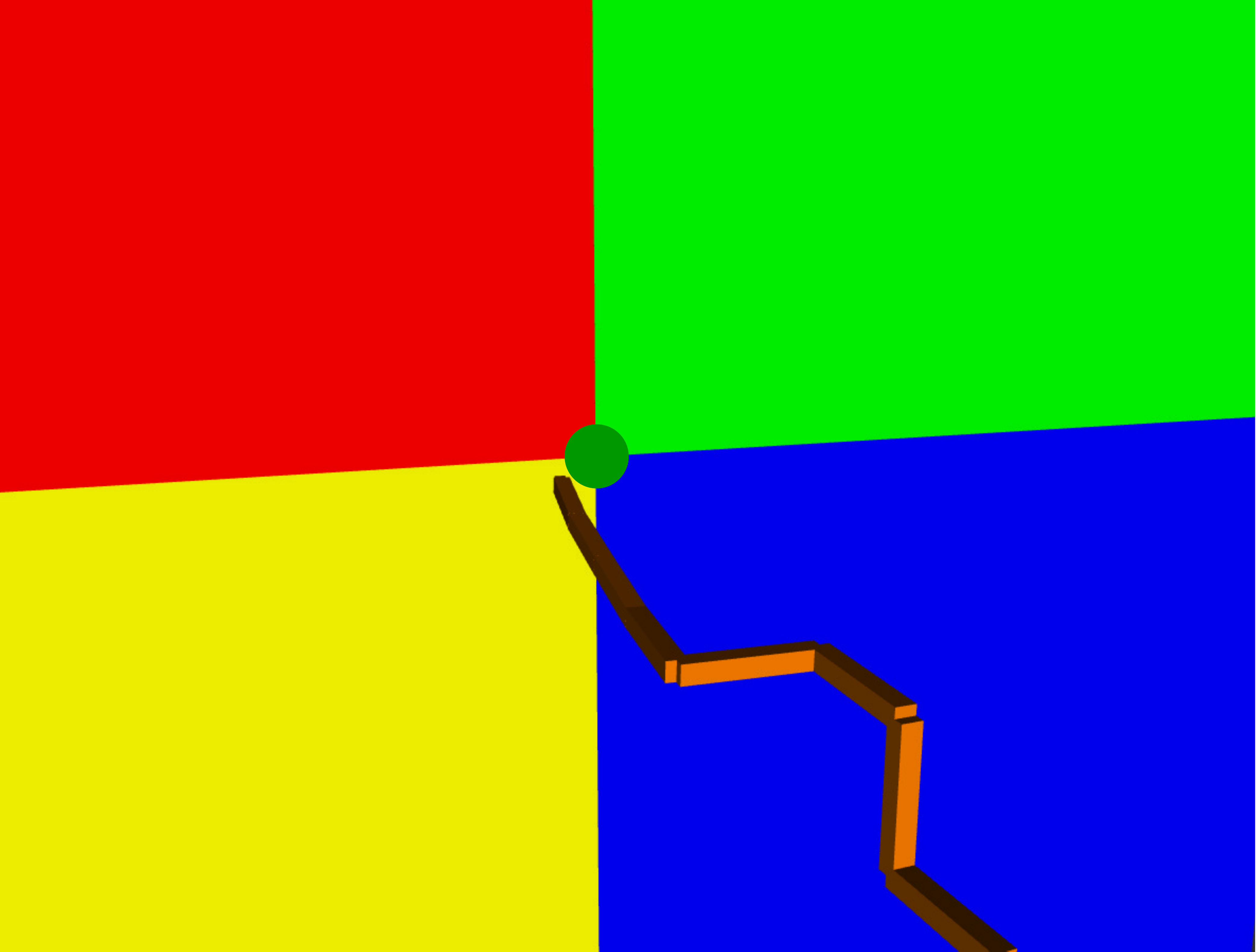} \\
(a) & (b) & (c) & (d)
\end{tabular}
\caption{Illustrations of (a) the ContTag, (b) the Pushbox2D, (c) the Parking2D and (d) the SensorPlacement-8 problems. The goal regions in the Pushbox2D, Parking2D and SensorPlacement-8 problems are marked as green circles. Images (b), (c) and (d) are taken from \citet{hoerger22:ADVT}.}
\label{f:problemScenarios}
\end{figure*}

\subsubsection{ContTag}
ContTag\ccite{seiler2015online} is a modified version of the popular POMDP benchmark problem Tag\ccite{Pin03:Point}. An agent operates in a 2D-environment (shown in \Cref{f:problemScenarios}(a)) where it has to tag an opponent, while the opponent is actively trying to avoid the agent. The state space is a five-dimensional continuous space consisting of the location $(x_r, y_r)$ and orientation $\phi_r$ (expressed in radians) of the agent and the location $(x_o, y_o)$ of the opponent. The action space is $\actSpace = [-\pi, \pi] \cup \{\textrm{TAG}\}$, where the first component is the set of all angular directions the agent can move towards, whereas the second component is an additional tag action. At each step, in case the agent executes a directional action, its orientation and position evolve deterministically according to $\phi_r' = \phi_r + \act$, $x_r' = x_r \mathrm{cos}(\phi_r')$ and $y_r' = y_r + \mathrm{sin}(\phi_r')$. Simultaneously, the opponent attempts to move away from the agent and its next location $(x_o', y_o')$ is computed according to $x_o' = x_o + \mathrm{cos}(\phi) + e_x$ and $y_o' = y_o + \mathrm{\sin}(\phi) + e_y$, where $\phi = \mathrm{atan2}(y_o - y_a, x_o - x_a)$ is the angle between the agent and the robot, and $e_x$ and $e_y$ are random motion errors drawn from a truncated Normal distribution $\normal(\mu, \sigma^2, l, u)$, which is the Normal distribution $\normal(\mu, \sigma^2)$ truncated to the interval $[l, u]$. For our experiments, we set $\mu = 0$, $\sigma = \frac{\pi}{8}$, $l = -\frac{\pi}{8}$ and $u = \frac{\pi}{8}$. In case the agent's or the opponent's next state would collide with the boundary region, their positions remain the same. If the agent executes the \textrm{TAG} action, its position and orientation remains unchanged as well. 

The initial positions of the agent and the opponent are drawn from a uniform distribution over the free space of the environment, while the initial orientation of the agent is set to $0$. While the agent knows its initial position and orientation, the position of the opponent is unknown. However, the agent has access to a noisy sensor with outputs $\{\textrm{DETECTED}, \textrm{NOT DETECTED}\}$ to detect the opponent. If the opponent is visible, \ie, the relative angle $\phi - \phi_r$ between the agent and the opponent is inside the interval $[-\frac{\pi}{2}, \frac{\pi}{2}]$, the sensor produces the output \textrm{DETECTED} with probability $p = 1-\frac{\phi - \phi_r}{\pi}$ and $\textrm{NOT DETECTED}$ with probability $1-p$. Otherwise, the sensor deterministically produces \textrm{NOT DETECTED}.

Upon activating the \textrm{TAG} action, the agent receives a reward of $10$ if its Euclidean distance to the agent is smaller than one unit length. Otherwise the agent receives a penalty of $-10$. Every other action incurs a small penalty of $-1$. The problem terminates if the opponent is successfully tagged, or a maximum of $90$ steps has been reached. The discount factor is $0.95$.

Note that the action space in this problem is a hybrid space, consisting of both continuous and discrete variables. For \solverAbbr, we embed the action space into the two-dimensional interval $[-\pi, \pi]\times [-1, 1]$ and define that the agent executes the \textrm{TAG} action, if the second component of the action is in the interval $[0, 1]$.
 
\subsubsection{Pushbox}
Pushbox \citep{seiler2015online}, as illustrated in \Cref{f:problemScenarios}(b), is a scalable motion planning problem, based on air hockey. A blue disk-shaped robot has to push a red disk-shaped puck into a green circle goal region, while avoiding any collisions with the black boundary region. If the puck is successfully pushed into the goal region, the robot receives a reward of $1,000$, but if either the robot or the puck collides with the boundary region, the robot receives a penalty of $-1,000$. Additionally, the robot incurs a penalty of $-10$ for every step taken. The robot can move around the environment by selecting a displacement vector. Upon colliding with the puck, the puck is pushed away, and its motion is affected by noise. The initial position of the robot is known, but the initial puck position is uncertain. However, the robot has access to a noisy bearing sensor to localize the puck. Additionally, the robot receives a binary observation from a contact sensor, indicating if a contact between the robot and the puck occurred. We consider two variants of the problem: \textbf{Pushbox2D} and \textbf{Pushbox3D}, which differ in the dimensionality of the state and action spaces. The former (illustrated in \Cref{f:problemScenarios}(b)) operates on a 2D plane, while the latter operates inside a 3D environment. More details on the Pushbox problem can be found in \citet{seiler2015online}.

\subsubsection{Parking}
The Parking problem, proposed in \citet{hoerger22:ADVT} and shown in \Cref{f:problemScenarios}(c), is a navigation problem in which a vehicle with deterministic dynamics operates in an environment populated by obstacles. The vehicle's goal is to safely reach a specified goal location while avoiding collisions with the obstacles. Reaching the goal earns a reward of 100, while collisions with obstacles incur a penalty of $-100$, and every step taken incurs a penalty of $-1$. The vehicle starts near one of three possible starting locations (red locations in \Cref{f:problemScenarios}(c)) with equal probability. There are three distinct areas in the environment with different types of terrain (colored areas in \Cref{f:problemScenarios}(c)), and the vehicle receives observations about the terrain type upon traversal. The observations are only correct 70\% of the time due to sensor noise. Here we consider two variants of the problem, \textbf{Parking2D} and \textbf{Parking3D}, with different state and action spaces. In Parking2D, the state consists of the vehicle's position, orientation, and velocity on a 2D plane, and the action space consists of the steering wheel angle and acceleration. In Parking3D, the vehicle operates in full 3D space, and the state and action spaces have additional components to model the vehicle's elevation and change in elevation, respectively. The problem is challenging due to multi-modal beliefs and the narrow passage to the goal, which makes good rewards scarce. Additional details can be found in \citet{hoerger22:ADVT}.

\subsubsection{SensorPlacement}
SensorPlacement, proposed in \citet{hoerger22:ADVT} and shown in \Cref{f:problemScenarios}(d), is a scalable motion planning under uncertainty problem, where a manipulator with $D$ degrees of freedom (DOF) and $D$ revolute joints operates inside a 3D environment with muddy water. The manipulator is situated in front of a marine structure, consisting of four walls (colored walls in \Cref{f:problemScenarios}(d)), and its task is to attach a sensor at a specific goal area located between the walls, which is reward by $1,000$, while avoiding collisions with the walls, which is penalized by $-500$. Additionally, the robot receives a penalty of $-1$ for every step. The state space consists of the joint angles for each joint, and the action space is a set of joint velocities. Initially, the robot is uncertain about its joint angle configuration and, due to underwater currents, the robot is subject to random control errors. To localize itself, the manipulator's end-effector is equipped with a touch sensor which provides noise-free information about the wall being touched. The problem has four variants, denoted as SensorPlacement-$D$, with $D$ ranging from 6 to 12, which differ in the number of revolute joints and the dimensionality of the action space. The discount factor is $\gamma=0.95$, and the robot must mount the sensor within 50 steps while avoiding collisions with the walls to succeed. Additional details can be found in\ccite{hoerger22:ADVT}.

\section{Heuristic Estimate of $V^*(\bel)$}\label{sec:heuristic}
In this section we provide a detailed description for each problem scenario on how the optimal value $V^*(\bel)$ is estimated at a leaf node, given the final state $\st\in\stSpace$ of a sampled trajectory.
In each case, we consider a simplified problem where partial observability and action noise are removed from the problem.
We then obtain a heuristic estimate for $V^{*}(\bel)$ by estimating the maximum total discounted reward achievable 
for the final state in the simplified problem.
This is done by computing a distance $\ell$ as an estimate for the number of steps needed to reach a certain configuration (e.g., for the agent to reach the opponent in ContTag), 
and treating $\ell$ as an integer for simplicity.
The estimate is crude and can be improved by obtaining better estimate for the number of steps to reach the desired configuration,
but we settled with the crude heuristic estimate as it performs well in our experiments.

\subsubsection{ContTag}
Suppose the variable $\ell$ denotes the Euclidean distance between the agent and the opponent for the final state $\st\in\stSpace$.
The heuristic estimate of $V^*(\bel)$ is computed via:

\begin{equation}\label{eq:heuristic_conttag}
\hat{V}^*(\bel) = \frac{1-\gamma^\ell}{1-\gamma}r_m   + \gamma^{\ell} r_t,
\end{equation}
where $r_m = -1$ is the step penalty, and $r_g = 10$ is the reward for succesfully tagging the opponent. The first term in \cref{eq:heuristic_conttag} estimates the total discounted cost of moving to the opponent, whereas the second term in \cref{eq:heuristic_conttag} estimates the discounted reward of tagging the opponent in the next step.

\subsubsection{Pushbox}
Similarly to ContTag, let $\ell$ be the Euclidean distance between the agent and the opponent for the final state $\st\in\stSpace$. The heuristic estimate of $V^*(\bel)$ is computed via:
\begin{equation}\label{eq:heuristic_pushbox}
\hat{V}^*(\bel) = \frac{1-\gamma^{\ell+1}}{1-\gamma}r_m + \gamma^{\ell}r_g,
\end{equation}
where $r_m = -1$ is the step penalty, and $r_g = 100$ the reward of pushing the opponent into the goal area. 
Here, the first term in \cref{eq:heuristic_pushbox} estimates the total discounted cost of reaching the opponent and pushing it into the goal area in the next step, whereas the second term in \cref{eq:heuristic_pushbox} estimates the discounted reward of pushing the opponent into the goal area in the next step after reaching the opponent.

\subsubsection{Parking and SensorPlacement problems}
We use the same heuristic estimate of $V^*(\bel)$ for the Parking and SensorPlacement problems, given the final state $\st\in\stSpace$ of a sampled trajectory. 
Suppose for the final state $\st$, the variable $\ell$ denotes the Euclidean distance between the vehicle and the goal in the Parking problem, and between the end-effector and the goal in the SensorPlacement problem respectively. We then compute a rough estimate of $V^*(\bel)$ via:
\begin{equation}\label{eq:heuristic_1}
\hat{V}^*(\bel) = \frac{1-\gamma^\ell}{1-\gamma}r_m   + \gamma^{\ell-1} r_t,
\end{equation}
where $r_m = -1$ is the step penalty in each problem, and $r_g$ is the reward for reaching the goal ($r_g=100$ in the Parking problem, and $r_g=1,000$ in the SensorPlacement problem). The first term in \cref{eq:heuristic_1} estimates the total discounted cost of reaching the goal, whereas the second term of \cref{eq:heuristic_1} estimates the discounted reward of reaching the goal in the same step.

\section{Solver Parameters}\label{asec:params}
\begin{table*}[htb]
\centering
\caption{Solver parameter and parameter ranges used when searching for the best parameters for all tested solvers in each problem scenario.}\label{t:param_ranges}
\begin{tabular}{lcccccc}
 & $N$ & $L$ & $K$ & $M$ & $\alpha$ & $\sigma_{\mathrm{init}}^2$ \\ \hline
\solverAbbr & $[10, 100]$ & $[1, 500]$ & $[1, 500]$ & $[1, 10]$ & $[0, 1]$ & $[0.01, 4.0]$\\\\
 & $C$ & $L$ & $C_r$ & \multicolumn{3}{c}{ } \\ \hline
ADVT & $[2, 500]$ & $[1, 500]$ & $[0.1, 100]$ &  \multicolumn{3}{c}{ } \\\\
 & $c$ & $k_a$ & $\alpha_a$ & $k_o$ & $\alpha_o$ & $\omega$ \\ \hline
VOMCPOW & $[2, 1]$ & $[1, 50]$ & $[0.001, 5]$ & $[1, 50]$ & $[0.001, 5]$ & $[0, 1]$  \\
POMCPOW & $[2, 1]$ & $[1, 50]$ & $[0.001, 5]$ & $[1, 50]$ & $[0.001, 5]$ & $-$
\end{tabular}
\end{table*}
\Cref{t:param_ranges} shows the parameter ranges used when searching for the best parameter of each solver. For all problem scenarios, we use the same parameter ranges. For \solverAbbr, the parameters $N$, $L$, $K$, $M$, $\alpha$ and $\vecStd_{\mathrm{init}}^2$ refer to the number of candidate policies per iteration, number of sampled trajectories per policy, number of elite samples, policy tree depth, smoothing factor and the variance of the initial distribution respectively. In all our experiments we set $\vecMu_{\mathrm{init}} = \mathbf{0}$ and $\vecStd_{\mathrm{init}}^2 = \sigma_{\mathrm{init}}^2\mathbf{1}$, where $\mathbf{0}$ and $\mathbf{1}$ are vectors of ones and zeroes respectively. Details regarding the parameters for ADVT can be found in \citet{hoerger22:ADVT}, while details regarding the parameters for VOMCPOW and POMCPOW can be found in \citet{lim2020voronoi}. To find the best set of parameters for each solver and problem scenario, we apply the CE-method for $100$ iterations, using a multivariate Gaussian distribution with diagonal covariance matrices (similarly to \solverAbbr). The best parameter is then chosen to be the mean of the resulting distribution over the parameter space.

\end{document}